\documentclass[sigconf]{acmart}

\usepackage{amsmath,amssymb,amsfonts}
\usepackage{algorithmic}
\usepackage{graphicx}
\usepackage{textcomp}
\usepackage{xcolor}

\usepackage[utf8]{inputenc} 
\usepackage[T1]{fontenc}    
\usepackage{hyperref}       
\usepackage{url}            
\usepackage{booktabs}       
\usepackage{amsfonts}       
\usepackage{nicefrac}       
\usepackage{microtype}      
\usepackage{xcolor}         
\usepackage{color}
\usepackage{paralist}
\usepackage{amsmath}
\usepackage{graphicx}
\usepackage{subfigure}
\usepackage{array}
\usepackage{epstopdf}
\usepackage[ruled,linesnumbered]{algorithm2e}
\usepackage{multirow} 
\usepackage{tabularx}

\usepackage{float}
\usepackage{placeins}
\usepackage{caption}
\usepackage{afterpage}

\copyrightyear{2026}
\acmYear{2026}
\setcopyright{cc}
\setcctype{by}
\acmDOI{10.1145/3773966.3777937}

\acmConference[WSDM '26] {Proceedings of the Nineteenth ACM International Conference on Web Search and Data Mining}{February 22--26, 2026}{Boise, ID, USA.}
\acmBooktitle{Proceedings of the Nineteenth ACM International Conference on Web Search and Data Mining (WSDM '26), February 22--26, 2026, Boise, ID, USA}
\acmISBN{979-8-4007-2292-9/2026/02}

\def\oursys{Fast-DataShapley\xspace}

\begin{document}

\title{\oursys: Neural Modeling for Training Data Valuation}

\author{Haifeng Sun}
\affiliation{
  \institution{University of Science and Technology of China}
  \city{Hefei}
  \country{China}
}
\email{sun1998@mail.ustc.edu.cn}

\author{Yu Xiong}
\affiliation{
  \institution{Fuxi AI Lab, NetEase Inc.}
  \city{Hangzhou}
  \country{China}
}
\email{xiongyu1@corp.netease.com}

\author{Runze Wu}
\authornote{Corresponding author}
\affiliation{
  \institution{Fuxi AI Lab, NetEase Inc.}
  \city{Hangzhou}
  \country{China}
}
\email{wurunze1@corp.netease.com}

\author{Xinyu Cai}
\affiliation{
  \institution{Nanyang Technological University}
  \country{Singapore}
}
\email{xinyu.cai@ntu.edu.sg}

\author{Changjie Fan}

\affiliation{
  \institution{Fuxi AI Lab, NetEase Inc.}
  \city{Hangzhou}
  \country{China}
}
\email{fanchangjie@corp.netease.com}

\author{Lan Zhang}

\affiliation{
  \institution{University of Science and Technology of China}
  \city{Hefei}
  \country{China}
}
\email{zhanglan@ustc.edu.cn}

\author{Xiang-Yang Li}
\authornotemark[1]
\affiliation{
  \institution{University of Science and Technology of China}
  \city{Hefei}
  \country{China}
}
\email{xiangyangli@ustc.edu.cn}

\begin{abstract}
The value and copyright of training data are crucial in the artificial intelligence industry. 
Service platforms should protect data providers' legitimate rights and fairly reward them for their contributions. 
Shapley value, a potent tool for evaluating contributions, outperforms other methods in theory, but its computational overhead escalates exponentially with the number of data providers. 
Recent studies on Shapley values have proposed various approximation algorithms to address the computational complexity issues inherent in exact calculations.
However, they need to retrain for each test sample, leading to intolerable costs. 
We propose \oursys, a one-pass training framework that leverages the weighted least squares characterization of the Shapley value to train a reusable explainer model with real-time reasoning speed. 
Given new test samples, no retraining is required to calculate the Shapley values of the training data. 
Additionally, we propose three methods with theoretical guarantees to reduce training overhead from two aspects: the approximate calculation of the utility function and the reduction of the sample space complexity. 
We analyze time complexity to show the efficiency of our methods. 
The experimental evaluations on various image datasets demonstrate superior performance and efficiency compared to baselines. 
Specifically, the performance is improved to more than $2\times$, and the explainer's training speed can be increased by two orders of magnitude.

\end{abstract}

\begin{CCSXML}
<ccs2012>
   <concept>
       <concept_id>10002951.10003227.10003351</concept_id>
       <concept_desc>Information systems~Data mining</concept_desc>
       <concept_significance>300</concept_significance>
       </concept>
 </ccs2012>
\end{CCSXML}

\ccsdesc[300]{Information systems~Data mining}

\keywords{Shapley value, data attribution, training data valuation}

\maketitle

\section{Introduction}

The rapid development of deep learning has led to the widespread adoption of artificial intelligence services in various fields, including science and business~\cite{wurman2022outracing,jumper2021highly}. 
Deep learning is built on training data, typically obtained from multiple providers of varying quality~\cite{reddi2021data}. Data is valuable because high-quality data can train high-performance models. Thus, data copyright and pricing are crucial~\cite{feng2021online,xue2022competitive}. Figure~\ref{scenario} illustrates an example of incentivizing data providers to provide high-quality data for artificial intelligence-generated content (AIGC) tasks. The revenue is distributed to the data providers for each generated sample. This not only assigns the reward the data providers deserve but also achieves copyright protection by data attribution. Therefore, ensuring equitable training data valuation is critical. 
 \begin{figure}[t]
  \centering
  \includegraphics[width=\linewidth]{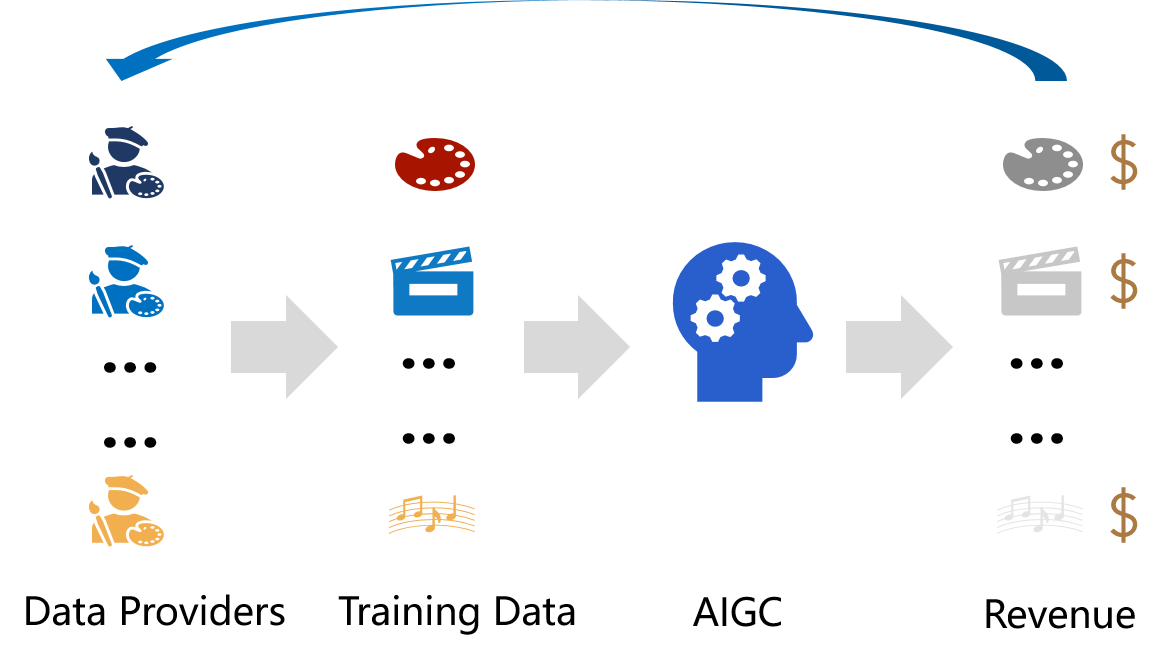}
  \caption{An example of motivating AIGC data providers.}
  \label{scenario}
\end{figure}

Currently, there exist two approaches~\cite{hammoudeh2022training} to evaluate the contributions of training data: \textbf{Gradient-Based Influence Estimators} (GBIEs) and \textbf{Retraining-Based Methods} (RBMs). 
GBIEs estimate the contributions by adjusting the gradients of training and test samples during or after the training process~\cite{koh2017understanding,guo2020fastif,schioppa2022scaling,yeh2018representer,sui2021representer,pruthi2020estimating,yeh2022first,hara2019data,chen2021hydra, wang2024helpful}. These methods are efficient, but makes many assumptions about the learning environment~\cite{hammoudeh2022training}, such as convexity.
RBMs~\cite{ling1984residuals,black2021leave,jia2021scalability,sharchilev2018finding,wojnowicz2016influence,feldman2020neural,kandpal2022deduplicating,jiang2020characterizing,jia2019efficient,ghorbani2019data}
provide this simplicity without any assumptions by retraining the model with different subsets of the training data. For example, the leave-one-out method~\cite{ling1984residuals} involves training a new model by removing a single training sample, and the performance difference between the new and original models determines the contribution of the removed sample.
Among existing RBMs, the Shapley value methods consider the joint influence relationship between data points and have excellent equitability due to their strong theoretical foundations. They view supervised machine learning as a cooperative game to train a target model, with each training data as a player. It is worth noting, however, that the computational complexity of the Shapley value methods escalates exponentially with the number of players, which is significantly higher than other methods.
Thus, many Shapley-based methods~\cite{ghorbani2019data,jia2019towards,schoch2022cs} have explored approximate approaches to calculating Shapley values for data valuation. 
However, these approximate methods evaluate the contributions of the training data to the entire test set. However, we focus on the contributions to each test sample. When given one new test sample, Shapley values need to be recalculated, which involves retraining the target model numerous times, and the overhead increases linearly with the number of prediction samples.

To tackle the challenges mentioned above, we propose \oursys, which trains a reusable explainer to evaluate the training data's contributions to any new test samples without any target model retraining. 
We first define the contribution of training data to a single test sample and design a new utility function to evaluate such contribution.
Inspired by the weighted least squares characterization of the Shapley values~\cite{charnes1988extremal,jethani2021fastshap}, we use an objective function minimization to train the explainer model without knowing the ground truth Shapley values. The model can infer the Shapley values of the training data for any new test samples in real time. 
To further reduce the one-pass training cost, we also propose three techniques: Approximate Fast-DataShapley ($AFDS$), Grouping Fast-DataShapley ($GFDS$), and Grouping Fast-DataShapley Plus ($GFDS^+$), which trade off the training efficiency and accuracy of the explainer model. $AFDS$ approximates the true utility value based on the loss information in the first few training epochs. 
$GFDS$ and $GFDS^+$ leverage the ideas of Owen value and Shapley value symmetry, respectively.
These two approaches group the training data by designing different grouping functions and then treat each group as a separate coalition player when calculating the Shapley values. As a result, the time complexity of the calculation is significantly reduced.

The contributions of our work are summarized as follows: 

$\bullet$  To the best of our knowledge, we are the first to apply a trained explainer for equitably evaluating the contribution of each training data. The explainer can predict the Shapley values for a new test sample in real-time, without retraining.

$\bullet$ We propose three methods to reduce the training overhead of the explainer model. $AFDS$ uses part of the training information during retraining to estimate the true value of the utility function. $GFDS$ and $GFDS^+$ employ the idea of grouping to reduce the number of combinations of calculating Shapley values. We also do some theoretical error analysis to support our methods.

$\bullet$ We propose a new evaluation metric named value loss to measure the contribution of the training data for an individual test sample.
Extensive experimental results on various datasets demonstrate the performance and efficiency of our methods. Specifically, the average performance is 2.5 to 13.1 times that of the baseline. Furthermore, the training speed of the explainer can be increased by a factor of 100+.

\section{Preliminaries}

In this work, we focus on the data valuation problem in supervised classification tasks. Let $x \in \mathbb{R}^d$ represent a d-dimensional feature vector that follows the distribution $X$, denoted as $x \sim X$. The ground truth label of a data point is denoted by $y \in Y = \{1,\dots,m\}$. The training data, denoted as $D$, can be expressed as $\{(x_1,y_1),\dots,(x_n,y_n)\}$, where $x_i=(x_{i}^1,\dots,x_{i}^d)$ and $y_i\in Y$. We use $s\in \{0,1\}^n$ to represent the subset of the indices $I = \{1,\dots,n\}$. The subset of the training data, denoted as $D_s$, is defined as $D_s = \{(x_i,y_i)|i:s_i=1\}$.

\subsection{Prediction-specific Training Data Valuation Framework}
\begin{figure}[htbp]
  \centering
  \includegraphics[width=\linewidth]{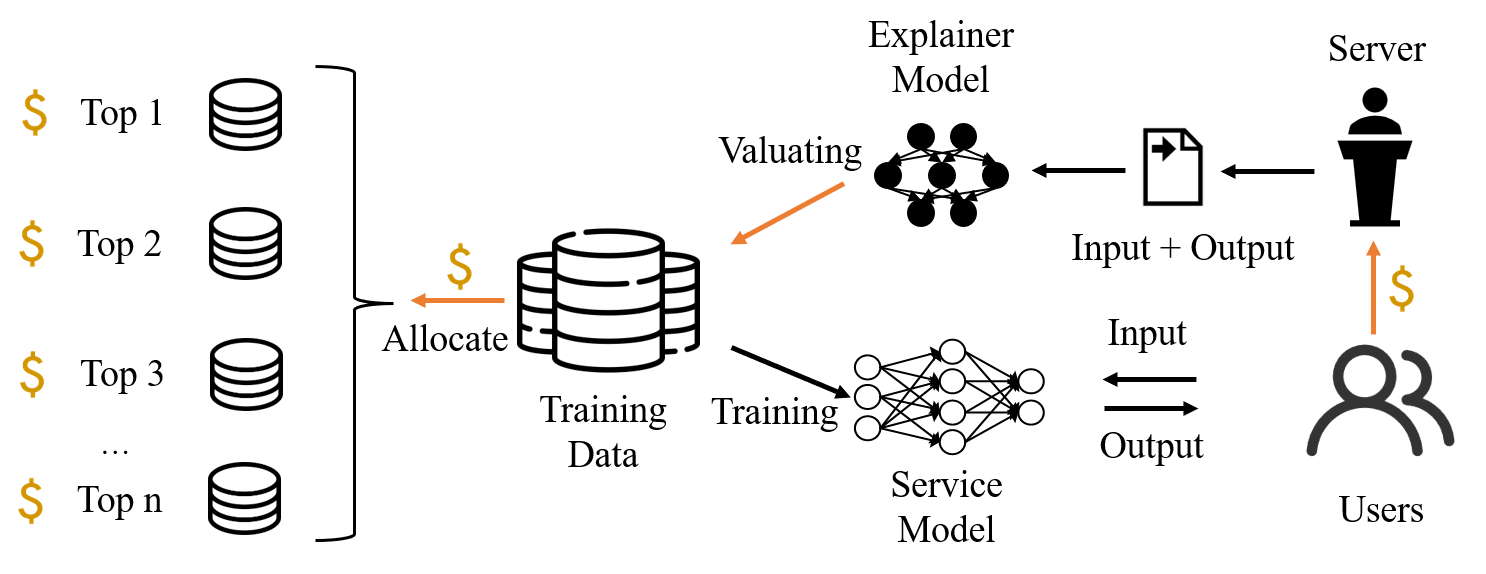}
  \caption{The overview of the framework of return assignment.}
  \label{framework}
   \vspace{-0.2in}
\end{figure}
As shown in Figure \ref{framework}, it is assumed that the training data $D$ is provided by $n$ data providers respectively. The server utilizes the training data to train a service model (target model) $f_D: X\to \mathbb{R}^m$ that users will pay for using the trained service model. Let $x$ be a test sample to be predicted, and $y = \arg\max_l (f_D(x)_l)$ is the output. To achieve data copyright protection and incentivize data providers to provide high-quality data, the server needs to assign appropriate money to each provider based on their contribution to the current prediction task.  Let $c$ represent the total money distributed to the data providers. This framework employs a trained explainer model $\phi_\theta: X\times Y\to \mathbb{R}^n$ to infer the Shapley values of each training data for predicting the {test} sample $x$ as the label $y$. The money allocated to the i-th data provider is denoted as $c_i$ and is calculated as follows:
$c_i = \frac{\phi_\theta(x,y)_i}{\sum_{j=1}^n\phi_\theta(x,y)_j}c$. 
So how do we define the contribution of training data? Existing work defines the contribution of training data by the accuracy of the model improved on the entire test set. Different from previous definitions, we focus on the prediction-specific scenario where the model predicts a single sample. We define the contribution of different training data by evaluating the improvement of the model's prediction confidence of that sample.

\subsection{DataShapley}
DataShapley~\cite{ghorbani2019data} first utilizes Shapley values to solve the training data value assignment problem.
The Shapley value (SV)~\cite{shapley1953value} is commonly used in game theory to identify the contributions of players collaborating in a coalition.
Given a value function $v: \{0,1\}^n\to \mathbb{R}$, the SV of the training data $(x_i, y_i)$ is defined as $\phi_i(v)$, which is the average marginal contribution of $(x_i, y_i)$ to all possible subsets of $D$: $\phi_i(v) = \frac{1}{n}\sum_{s_i\neq 1}{\tbinom{n-1}{\vec1^T s}}^{-1}(v(s+e_i)-v(s))$.
The quantity $v(s+e_i)-v(s)$ represents the marginal contribution of the training data $(x_i, y_i)$ to the subset $s$.
The SV satisfies the following four properties:
Effectiveness, Symmetry, Redundancy and  Additivity.
However, calculating the SVs of the training data requires an exponentially large number of calculations relative to the size of the training data. Thus, DataShapley proposes Truncated Monte Carlo Shapley to tackle this problem by extending Monte Carlo approximation methods. 
Initially, DataShapley samples random permutations of the training data, incrementally adding data from first to last. For each data point, it computes the marginal contribution gained after its inclusion. This process can be repeated, with the mean of all marginal contributions for a given data point used to refine the approximation. As the volume of training data increases, the marginal contribution of subsequent data diminishes~\cite{mahajan2018exploring,beleites2013sample}. Hence, DataShapley truncates the calculation and approximates the remaining data's marginal contributions as zero when the marginal contribution becomes negligible.

\subsection{FastSHAP}
In the field of feature importance analysis, KernelSHAP and FastSHAP are proposed to accelerate the computation of SVs. 

\textbf{KernelSHAP}~\cite{lundberg2017unified} trades off the accuracy and computation cost by sampling from the Shapley kernel. Given a test sample $x\in \mathbb{R}^d$, a label $y\in Y$ and a value function $u_{x,y}: \{0,1\}^d\to \mathbb{R}$, the SVs $\phi_{kernel}\in \mathbb{R}^d$ of the sample features are the solution of the Constrained Weighted Least Squares (CWLS) optimization  problem~(\ref{4.2}):
\begin{equation}
\begin{aligned}
\phi_{kernel}(u_{x,y}) &= \arg\min_{\phi_{x,y}} \ \mathbb{E}_{P(\bar{s})}[(u_{x,y}(\bar{s})-u_{x,y}(\vec 0)-\bar{s}^T\phi_{x,y})^2]\\
&s.t.\  \vec1^T\phi_{x,y} = u_{x,y}(\vec 1)-u_{x,y}(\vec 0),\\
&where \  P(\bar{s})\propto\frac{d-1}{\tbinom{d}{ \vec1^T \bar{s}}\cdot  \vec1^T \bar{s} \cdot (d- \vec1^T \bar{s}) }.
\end{aligned}
 \label{4.2}
\end{equation}
 $\bar{s}\in \{0,1\}^d$ is the subset of the features' indices and meets that $0<\vec1^T \bar{s}<d$~\cite{charnes1988extremal}. The distribution $P(\bar{s})$ is the Shapley kernel. $ \vec1^T\phi_{x,y} = u_{x,y}(\vec 1)-u_{x,y}(\vec 0)$ is the efficiency constraint.

\textbf{FastSHAP}~\cite{jethani2021fastshap} uses a learned function $\phi_{fast}(x,y,\theta):(X, Y)\to \mathbb{R}^d$ to generate SV explanations for sample features  by minimizing the following loss function:
$ Loss(\theta) = \mathbb{E}_{p(x)} \mathbb{E}_{Unif(y)} \mathbb{E}_{p(\bar{s})}[(u_{x,y}(\bar{s})-u_{x,y}(\vec 0)-\bar{s}^T\phi_{fast}(x,y,\theta))^2]$,
where $p(x)$ represents the distribution of sample $x$, $Unif(y)$ represents that $y$ is uniformly selected.
FastSHAP has two main advantages compared to existing methods: (1) it avoids solving a separate optimization problem for a new sample that needs to be interpreted, and (2) it can compute SVs in real time.
Inspired by FastSHAP, we train a reusable explainer model to calculate the SVs of a model's training data for test samples instead of repeatedly retraining the target model.

\section{Methods}
In this section, we first introduce \oursys to train the explainer model. Then, we propose three methods to reduce the training overhead. The relationships between these methods are illustrated in Figure \ref{method}. 
\oursys is based on CWLS optimization of SVs. Firstly, the value function is calculated through Shapley kernel sampling, then the SVs are updated through efficiency constraints, and finally, the explainer model is iterated by optimizing the loss. $AFDS$ reduces the training overhead by approximating the value function. It uses the average utility value from previous rounds to estimate the true value.
$GFDS$ reduces computational complexity by first grouping the training data and then treating each other group separately as a whole. 
$GFDS^+$ treats each group separately as a whole with no distinction between in-group.
Next, we will describe each method in detail.
\begin{figure*}[ht]
  \centering
  \includegraphics[width=\linewidth]{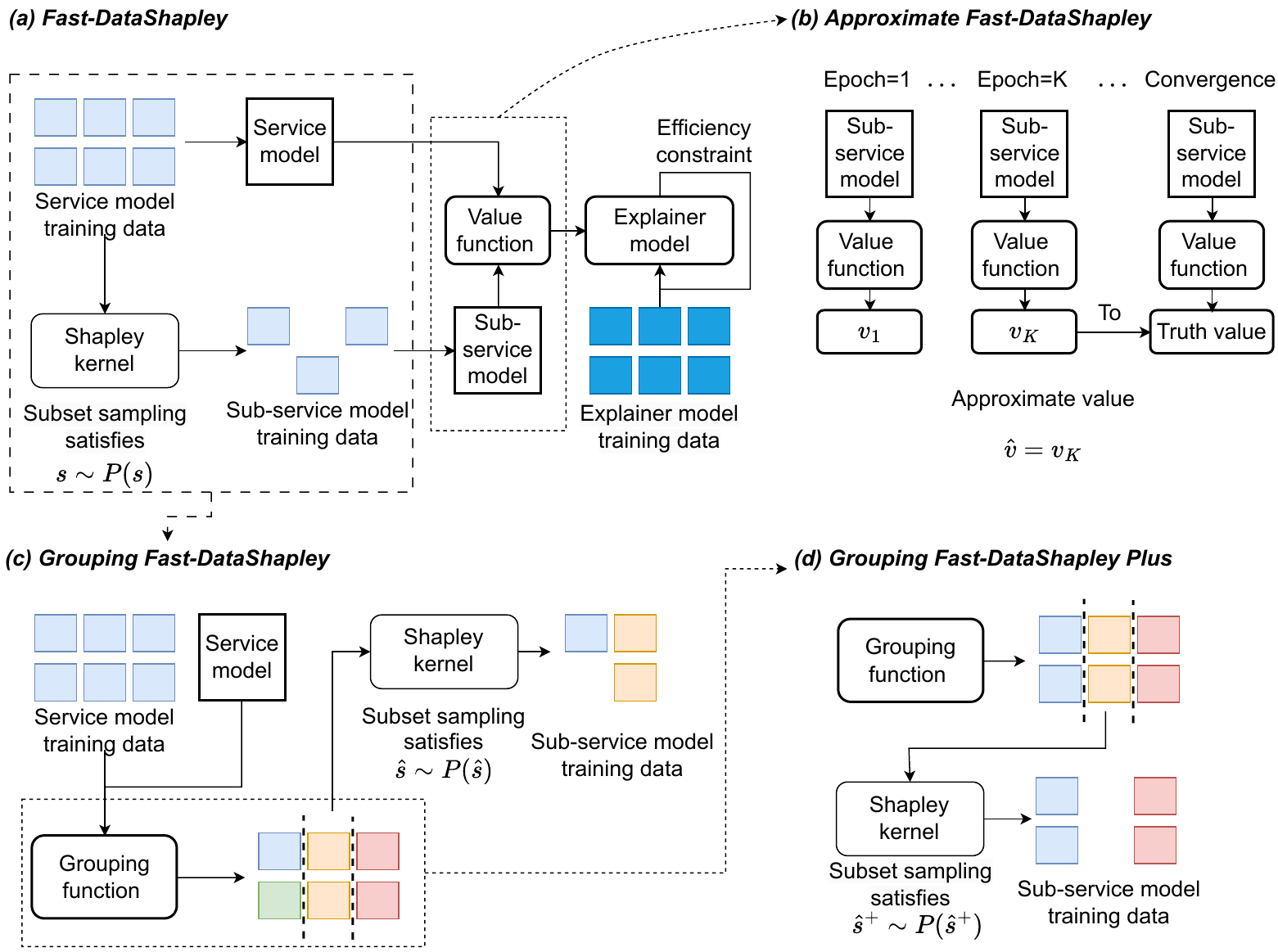}
  \caption{(a) The training of the explainer model mainly includes two core parts: the calculation of the value function and the other is updating the model through efficiency constraint. (b) Utilizing information from the first few epochs of sub-service model training to estimate the true value of utility. (c) A grouping function groups the training data, and each data of the same group and the overall data of other groups are regarded as players of the grand coalition to calculate the SV. (d) After grouping, each group is treated as a single player in the grand coalition.}
  \label{method}
\end{figure*}

\subsection{ \oursys }

\oursys leverages the idea of FastSHAP to train the explainer model. The explainer model takes the current predicted sample $x$ and the predicted label $y$ of the service model $f_D$ as input and outputs the SVs $\phi_\theta(x,y)$ of the training data $D$. Let $f_{D_s}: X\to \mathbb{R}^m$ denote the sub-service model trained using the sub-training data $D_s = \{(x_i,y_i)|i:s_i=1\}$. Figure \ref{method}
 (a) gives the \oursys training process. To evaluate the contribution of different training data subsets to the service model's prediction of the sample $x$ as $y$, \oursys designs the following value function $v_{x,y}(s)$ to train the explainer model by confidence in model predictions: 
\begin{equation}
\begin{aligned}
v_{x,y}(s) =\sigma(f_{D_s}(x))_y,
\end{aligned}
 \label{5.32}
\end{equation}
where $\sigma: \mathbb{R}^m\to (0,1)^m$ is a standard softmax function and is defined by the formula $\sigma(z)_i = \frac{e^{z_i}}{\sum_{j=1}^m e^{z_j}}$, for $i=1,\dots,m$ and $z=(z_1,\dots,z_m)\in \mathbb{R}^m$. 
The loss function of the explainer model $\phi_\theta: X\times Y\to \mathbb{R}^n$ is defined as follows:
\begin{equation}
\begin{aligned}
 L(\theta) &= \mathbb{E}_{x\sim X,y\sim U(Y),s\sim P(s)}[(v_{x,y}(s)-v_{x,y}(\vec 0)\\
&-s^T\phi_\theta(x,y))^2].\\
\end{aligned}
 \label{5.31}
\end{equation}
Here, $y$ is sampled uniformly from the label set $Y$. $s$ is sampled from the Shapley kernel $P(s)$, where $P(s)=\frac{n-1}{\tbinom{n}{ \vec1^T s}\cdot  \vec1^T s \cdot (n- \vec1^T s) }$.
To satisfy the efficiency constraint of SVs, \oursys  adjusts the SV predictions by the additive efficient normalization~\cite{ruiz1998family}, i.e., 
$\hat{\phi} \leftarrow \hat{\phi} + n^{-1}(v_{x,y}(\vec 1) - v_{x,y}(\vec 0) - {\vec 1}^T\hat{\phi})$.
     
        

 
\subsection{ Approximate \oursys ($AFDS$)}
However, computing the value function $v_{x,y}(s)$ requires retraining many sub-service models to converge, which can bring a lot of training overhead. Then, we propose a new approximation algorithm, $AFDS$, to alleviate these limitations.
Inspired by G-Shapley~\cite{ghorbani2019data}, which uses partial gradient information to approximate the value function, $AFDS$ uses information from the first few training epochs of the sub-service models to estimate the true value of the utility instead of training the sub-service models to converge, thereby greatly reducing the cost of training.
As shown in Figure \ref{method} (b),
$AFDS$ designs the value function: 
\begin{equation}
\begin{aligned}
v_{x,y,K}(s) = \sigma(f^K_{D_s}(x))_y.
\end{aligned}
 \label{5.1}
\end{equation}
The hyperparameter $K$ indicates the number of epochs used to train the sub-service models. A sub-service model trained at the i-th epoch can be denoted as $f_{D_s}^i: X\to \mathbb{R}^m$. 
This value function estimates the true value of the utility by taking the $K-th$ epoch. 
The explainer model trained by $AFDS$ can be represented as $\phi_{\theta,K}: X\times Y\to \mathbb{R}^n$. The loss function of this explainer model, denoted as $L(\theta,K)$, can be expressed as follows: 
\begin{equation}
\begin{aligned}
 L(\theta,K) &= \mathbb{E}_{x\sim X,y\sim U(Y),s\sim P(s)}[(v_{x,y,K}(s)-v_{x,y,K}(\vec 0)\\
&-s^T\phi_{\theta,K}(x,y))^2]\\
\end{aligned}
 \label{5.3}
\end{equation}

To improve the accuracy of the value function $v_{x,y, K}$ during the initial $K$ epochs of training the sub-service models, we speed up the convergence of the sub-service models by adjusting the learning rate. Specifically, the learning rate for training the service model $f_D$ is $lr$, then the learning rate for training the sub-service model $f^i_{D_s}$ is $\beta lr$, where $\beta$ is a hyperparameter. 
Additionally, to satisfy the efficiency constraint, $AFDS$ adjusts the SVs predicted by the explainer model using the following formula:
$\hat{\phi} \leftarrow \hat{\phi} + n^{-1}(v_{x,y,K}(\vec 1)-v_{x,y,K}(\vec 0)- {\vec 1}^T\hat{\phi})$.

\subsection{ Grouping \oursys ($GFDS$)}
In addition to approximating the value function, we can also reduce training overhead from the perspective of group sampling. 
We introduce $GFDS$, which leverages the Owen value's idea \cite{gimenez2019owen,casajus2009shapley}, which is close to the SV, significantly reducing the training overhead by reducing the number of players in the coalition.
\begin{algorithm}
	\renewcommand{\algorithmicrequire}{\textbf{Require:}}
	\caption{Grouping \oursys Training}
	\label{alg:3}
    \KwIn{Service model training data $D$, service model $f_D$, grouping function $\Psi_N$, value function $v_{x,y,K}$, explainer learning rate  $\alpha$, service model learning rate $lr$, hyperparameters $K,N,\beta$, sub-service model learning rate $\beta lr$}
    \KwOut{Explainer model: $\hat{\phi}_{\theta,K}(x,y)$}
     
	\begin{algorithmic}[1]
		\STATE Initialize $\hat{\phi}_{\theta,K}(x,y)$
		\WHILE{not converged}
            \STATE sample $x\sim p(x), y\sim U(Y), \hat{s}\sim P(\hat{s})$ 
            \STATE $\{D_1,\dots,D_N\} = \Psi_N(D,f_D)$
            \FOR{  $j\in \{1,\cdots,N\}$}
            
            \STATE $\hat{D} = \{D_i|i\neq j\}\bigcup D_j$
            \FOR{ epoch $i\in \{1,\cdots,K\}$}
                \STATE $f^K_{\hat{D}_{\hat{s}}}$ $\leftarrow$ train $f^i_{\hat{D}_{\hat{s}}}$ at epoch $K$
                \STATE  $v_{x,y,K}(\hat{s})\leftarrow \sigma(f^K_{\hat{D}_{\hat{s}}}(x))_y$ 
            \ENDFOR
            
            \STATE predict $\hat{\phi} \leftarrow \hat{\phi}_{\theta,K}(x,y)$ 
            \STATE $\hat{\phi} \leftarrow \hat{\phi} + n^{-1}(v_{x,y,K}(\vec 1) - v_{x,y,K}(\vec 0)- {\vec 1}^T\hat{\phi})$
            \STATE $s\leftarrow \hat{s}$
            \STATE loss $L\leftarrow (v_{x,y,K}(\hat{s}) - v_{x,y,K}(\vec 0)- s^T \hat{\phi})^2$
            
            \STATE update $\theta \leftarrow \theta - \alpha \nabla L$ 
            \ENDFOR
        \ENDWHILE
        
	\end{algorithmic}  
\end{algorithm} 
As shown in Figure \ref{method} (c), $GFDS$ uses a model-dependent grouping function $g$, such as clustering according to data labels or feature embeddings, to group the training data $D$ into $N$ mutually disjoint subsets, i.e., $g(D,f_D)=\{D_1,\dots,D_N\}$, s.t., $D_i\cap D_j=\emptyset$ and $D=\bigcup_{i=1}^ND_i$. 
Suppose $(x_i, y_i)$ belongs to $D_j$. When calculating the SV of $(x_i, y_i)$, all training data inside $D_j$ are individually regarded as the players of the grand coalition. Each subset $D_i$, where $i\neq j$, is regarded as one player. So, the number of players in the coalition is reduced from $n$ to $|D_j|+N-1$. Suppose the order of $(x_i, y_i)$ as a player in the grand coalition is $t$, its SV is denoted as $\varphi_i$ and can be calculated as follows:
$\varphi_i = \frac{1}{|D_j|+N-1}\sum_{\hat{s}_t\neq 1}{\tbinom{|D_j|+N-1-1}{\vec1^T \hat{s}}}^{-1}(v_{x,y}(\hat{s}+e_t)-v_{x,y}(\hat{s}))$,
where $\hat{s}\in \{0,1\}^{|D_j|+N-1}$.
The time complexity of SV calculation of \oursys is $\mathcal{O}(2^n)$, but the time complexity of $GFDS$ is $\mathcal{O}(2^{{|D_j|+N-1}})$. Assuming that the training data $D$ can be divided into $N$ equal parts, the time complexity is $\mathcal{O}(2^{{|\frac{n}{N}|+N-1}})$, when $N=\sqrt n$, the time complexity is the lowest. When $N=1$ or $N=n$, $\varphi$ is equivalent to SVs. $GFDS$  reduces the time complexity by selecting an appropriate $N$, reducing the explainer model's training overhead. It balances the accuracy of calculating the SVs and the training overhead.
In Algorithm \ref{alg:3}, the explainer model trained by $GFDS$ can be represented as $\hat{\phi}_{\theta,K}: X\times Y\to \mathbb{R}^n$. the grouping function defined by $GFDS$ is denoted as $g(D,f_D)=\Psi_N(D,f_D)$, where $\Psi_N$ is a clustering method that divides the training data with similar outputs of the service model into $N$ categories, i.e., $\Psi_N: \{ f_D(x_i)|i=1,\dots,n\}\to\{D_i|i=1,\dots, N\}$. 
In line 13 of the algorithm, notice that $GFDS$ uses $\hat{D}_{\hat{s}}=D_s$ to transform $\hat{s}$ into $s$ to align $s$ with $\hat{\phi}$ when updating the loss.
     
        

\subsection{ Grouping \oursys Plus (\texorpdfstring{$GFDS^+$}.)}
To calculate the contributions of the training data at a more coarse-grained level, we introduce $GFDS^+$,  
which assumes data in the same subset have the same contribution by the symmetry of the SVs. Each subset can be regarded as one player in the coalition. As shown in Figure \ref{method} (d), $GFDS^+$ can divide the training data into $N$ parts, it can reduce the time complexity from $\mathcal{O}(2^n)$ to $\mathcal{O}(2^N)$. Essentially, both $GFDS$ and $GFDS^+$ sacrifice the accuracy of calculating the SV in exchange for 
training efficiency.
Suppose $(x_{i},y_i)\in D_j$, its SV is denoted as $\varphi^+_i$ and can be calculated as follows:
$\varphi_i^+ = \frac{1}{|D_j|}\cdot\frac{1}{N}\sum_{\hat{s}^+_j\neq 1}{\tbinom{N-1}{\vec1^T \hat{s}^+}}^{-1}(v_{x,y}(\hat{s}^++e_j)-v_{x,y}(\hat{s}^+))$,
where $\hat{s}^+\in \{0,1\}^{N}$.
The formula is equivalent to first calculating the SVs that should be assigned to each subset $D_i$ and then evenly dividing the SVs to each subset data. 
To make it, we provide two solutions. The first is to change the explainer model's output dimension to $N$ and then distribute the predicted values evenly within the corresponding subset. The second is to design the following loss function to train the explainer model $\hat{\phi}^+_{\theta, K}: X\times Y\to \mathbb{R}^n$:
\begin{equation}
\begin{aligned}
&L_{GFDS^+}(\theta,K) = \mathbb{E}_{x\sim X,y\sim U(Y),\hat{s}^+\sim P(\hat{s}^+)}[(v_{x,y,K}(\hat{s}^+)-\\
&v_{x,y,K}(\vec 0)-s^T\hat{\phi}^+_{\theta,K})^2+\gamma\sum_{i=1}^N||\hat{\phi}^+_{\theta,K,D_i}-
\hat{\phi}^+_{\theta,K,D_i}/|D_i| ||].\\
\end{aligned}
 \label{5.6}
\end{equation}
Here, $\hat{\phi}^+_{\theta,K,D_i} \in \mathbb{R}^{|D_i|}$ represents the SVs predicted to the subset $D_i$ and $\gamma$ is a hyperparameter.
The second term means to reduce the distance between each data's SV in the same subset.  

\begin{table*}[t]
\newcommand{\tabincell}[2]{\begin{tabular}{@{}#1@{}}#2\end{tabular}}
\renewcommand\arraystretch{1.1}
\centering
\caption{$H_\eta$ of different methods on MNIST.}
\begin{tabular}{|p{3.1cm}<{\centering}|p{2.2cm}<{\centering}|p{2.2cm}<{\centering}|p{2.2cm}<{\centering}|p{2.2cm}<{\centering}|}
\hline
\bf{Methods}& \bf{$H_{5\%}\uparrow$} & \bf{$H_{10\%}\uparrow$} & \bf{$H_{15\%}\uparrow$ }& \bf{$H_{20\%}\uparrow$} \\
\hline
LOO   & 0.4560$\pm$0.1922 & 0.5058$\pm$0.2116 & 0.6983$\pm$0.3484 &  0.7456$\pm$0.3444 \\ 
\hline
DataShapley  & 0.4612$\pm$0.1679 & 0.5022$\pm$0.1539 & 0.6169$\pm$0.1992 & 0.7057$\pm$0.2102 \\ 
\hline
Group testing Shapley  & 0.4153$\pm$0.2405 & 0.5771$\pm$0.3256 & 0.7406$\pm$0.2747 & 0.8079$\pm$0.2681 \\ 
\hline
CS-Shapley  & 0.4622$\pm$0.3163 & 0.8972$\pm$0.6097 & 0.9363$\pm$0.5131 & 0.9830$\pm$0.5066 \\ 
\hline
FreeShap & 0.9218$\pm$0.1623 & 1.0582$\pm$0.1876 & 1.2546$\pm$0.2145 & 1.4576$\pm$0.2559 \\
\hline
\oursys (Ours)  & \bf{1.1985}$\pm$ 0.4043 & 4.0102$\pm$0.8477 & \bf{7.0062}$\pm$1.3877 & \bf{7.8676}$\pm$1.5516 \\ 
\hline
$AFDS$ (Ours)  & 0.6022$\pm$0.3048 & 0.9221$\pm$0.5201 & 1.2190$\pm$0.5477 & 1.3096$\pm$ 0.5291 \\ 
\hline
$GFDS$ (Ours)  & 0.8551$\pm$0.1594 &  1.0644$\pm$0.2613 & 1.4101$\pm$0.7337 & 1.9470$\pm$0.7394 \\ 
\hline
$GFDS^+$ (Ours)  & 1.0106$\pm$0.3636 & \bf{5.1576}$\pm$1.3814 & 6.1327$\pm$1.9130 & 6.5753$\pm$1.1251 \\ 
\hline
\end{tabular}
\label{table1}
\end{table*}

\begin{table*}[t]
\newcommand{\tabincell}[2]{\begin{tabular}{@{}#1@{}}#2\end{tabular}}
\renewcommand\arraystretch{1.1}
\centering
\caption{$H_\eta$ of different methods on CIFAR-10.}
\begin{tabular}{|p{3.1cm}<{\centering}|p{2.2cm}<{\centering}|p{2.2cm}<{\centering}|p{2.2cm}<{\centering}|p{2.2cm}<{\centering}|}
\hline
\bf{Methods}& \bf{$H_{5\%}\uparrow$} & \bf{$H_{10\%}\uparrow$} & \bf{$H_{15\%}\uparrow$ }& \bf{$H_{20\%}\uparrow$} \\
\hline
LOO             & 1.0244$\pm$0.6185  & 1.0621$\pm$0.6095  & 0.9483$\pm$0.4203  & 0.9679$\pm$0.3478\\ 
\hline
DataShapley     &  0.8411$\pm$0.4231 &  1.1437$\pm$0.4781 &  1.5334$\pm$1.0600 & 1.7496$\pm$1.6123\\ 
\hline
Group testing Shapley    &  0.8276$\pm$0.7340 &  1.6508$\pm$1.5221 &  2.0636$\pm$1.5102 & 3.0671$\pm$2.7041\\ 
\hline
CS-Shapley  & 0.5096$\pm$ 0.4118 & 0.5781$\pm$0.5533 & 0.7761$\pm$0.5005 & 1.1954$\pm$0.9665 \\ 
\hline
FreeShap & 1.6546$\pm$0.3745 & 2.2474$\pm$0.5987 & 2.9546$\pm$0.3464 & 3.3456$\pm$0.4832\\
\hline
\oursys (Ours)  & \bf{3.3311$\pm$1.3348} & 4.5221$\pm$1.0119  & 7.1352$\pm$1.4439 & \bf{10.2148}$\pm$1.1338\\ 
\hline
$AFDS$ (Ours)   & 1.4386$\pm$0.9740  & 2.3015$\pm$1.0139  &  2.6126$\pm$1.5182 & 3.8321$\pm$1.4084\\ 
\hline
$GFDS$ (Ours)   &  1.4103$\pm$1.0369  &  2.1501$\pm$1.6841 &  2.5322$\pm$1.1527  & 2.4206$\pm$ 1.1234\\ 
\hline
$GFDS^+$ (Ours) & 1.9516$\pm$1.5658 & \bf{ 6.8731}$\pm$1.1930  &  \bf{7.8081}$\pm$1.4934  & 7.8974$\pm$1.0149 \\ 
\hline
\end{tabular}
\label{table11}
\end{table*}

\subsection{Theoretical Analysis}
We carry out theoretical analysis on $AFDS$ and $GFDS^+$. $GFDS$ is based on Owen value theory~\cite{gimenez2019owen,casajus2009shapley}, so no further analysis is carried out here. The proofs of the theorems are given in the appendix.
\newtheorem{thm}{\bf Theorem} 
\begin{thm}\label{thm1}
There are $n$ training data $\{(x_{i},y_{i})|i\in [n]\}$. Given a value function $v$ and an approximate value function $v_a$ by $AFDS$.
If $|v_a - v|<\epsilon$, then $|\phi_i(v_a)-\phi_i(v)|\le 2\epsilon$ 
\end{thm}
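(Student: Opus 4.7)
The plan is to work directly from the definition of the Shapley value provided in the preliminaries and apply the triangle inequality twice, combined with the standard identity that the Shapley weights sum to one.

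First I would write out the difference $\phi_i(v_a) - \phi_i(v)$ using the explicit formula
$$\phi_i(v) = \frac{1}{n}\sum_{s_i \neq 1}\binom{n-1}{\vec{1}^T s}^{-1}\bigl(v(s+e_i)-v(s)\bigr).$$
Because the formula is linear in the utility function, the difference can be expressed in the same form but with $v_a - v$ in place of $v$. Grouping the terms, each summand becomes
$$\bigl(v_a(s+e_i)-v(s+e_i)\bigr) - \bigl(v_a(s)-v(s)\bigr),$$
which, by the triangle inequality and the hypothesis $|v_a - v| < \epsilon$, is bounded in absolute value by $2\epsilon$ uniformly over $s$.

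Next I would pull the $2\epsilon$ outside the sum and verify the weight identity
$$\frac{1}{n}\sum_{s_i \neq 1}\binom{n-1}{\vec{1}^T s}^{-1} = 1.$$
This follows by grouping the subsets $s$ with $s_i=0$ by their cardinality $k \in \{0,1,\dots,n-1\}$: there are exactly $\binom{n-1}{k}$ such subsets, each contributing weight $\binom{n-1}{k}^{-1}$, so the inner sum for each $k$ equals $1$ and the total is $n$, which cancels the $1/n$ prefactor. Combining this with the per-term bound of $2\epsilon$ yields $|\phi_i(v_a) - \phi_i(v)| \le 2\epsilon$, as claimed.

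I do not anticipate any genuine obstacles: the argument is essentially a linearity/triangle-inequality computation, and the only non-trivial ingredient is the Shapley-weight normalization identity, which is standard. The main point to be careful about is to apply the triangle inequality at the level of marginal contributions (giving the factor of $2$) rather than at the level of $v$ alone, so as to obtain the stated constant.
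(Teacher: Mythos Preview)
Your proposal is correct and follows essentially the same approach as the paper: write the difference of Shapley values as a weighted sum of differences of marginal contributions, bound each summand by $2\epsilon$ via the triangle inequality and the hypothesis $|v_a-v|<\epsilon$, and then use that the Shapley weights sum to one. Your explicit verification of the weight-normalization identity is actually slightly more detailed than what the paper provides.
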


\begin{thm}\label{thm2}
There are $n$ training data $D=\{(x_{i},y_{i})|i\in [n]\}$. Let us now define a partition $G = \{G_1,\dots, G_N\}$ of the training data, consisting of $N$ groups. Given a value function $v:\{S|S\subseteq D\}\to \mathbb{R}$. 
Then, data-wise Shapley values explaining the prediction are given by $\phi_j=\frac{1}{n}\sum_{S\subseteq D\textbackslash \{j\}}{\tbinom{n-1}{|S|}}^{-1}(v(S\cup\{j\})-v(S))$.
Group-wise Shapley values for the i-th group $G_i$ are given by $\phi_{G_i}=\frac{1}{N}\sum_{S\subseteq G\textbackslash \{G_i\}}{\tbinom{N-1}{|S|}}^{-1}(v(S\cup\{G_i\})-v(S))$.
By $GFDS^+$, we have $\phi_j^+ = \frac{1}{|G_i|}\phi_{G_i}$, $\forall j\in G_i$.
Let $T_0$ and $T_{AB}$ be two disjoint subsets of the partition $G$ and $S_0$, $S_A$, $S_B$
be any three data-subsets where $S_0 \subseteq T_0$, $S_A\subseteq T_{AB}$, $S_B\subseteq T_{AB}$.
Assume $v(S_0\cup S_A)-v(S_0\cup S_B)=v(S_A)-v(S_B)$. 
If $|\phi_k-\phi_j|\le \epsilon$, $\forall\ k,\ j\in G_i$. Then, $|\phi_j^+-\phi_j|\le (1-\frac{1}{|G_i|})\epsilon$.
\end{thm}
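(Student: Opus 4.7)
The plan is to first reduce the theorem to a single structural identity, $\phi_{G_i}=\sum_{j\in G_i}\phi_j$. Granting this, $\phi_j^+=\tfrac{1}{|G_i|}\phi_{G_i} = \tfrac{1}{|G_i|}\sum_{k\in G_i}\phi_k$ is exactly the arithmetic mean of $\{\phi_k:k\in G_i\}$, so
\begin{align*}
\phi_j^+-\phi_j \;=\; \frac{1}{|G_i|}\sum_{k\in G_i}(\phi_k-\phi_j) \;=\; \frac{1}{|G_i|}\sum_{k\in G_i,\,k\neq j}(\phi_k-\phi_j),
\end{align*}
and the triangle inequality together with the hypothesis $|\phi_k-\phi_j|\le\epsilon$ immediately yields $|\phi_j^+-\phi_j|\le \tfrac{|G_i|-1}{|G_i|}\epsilon = (1-\tfrac{1}{|G_i|})\epsilon$. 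All the real work is in the identity.

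To prove the identity, I would first unpack the decomposition hypothesis with $T_{AB}=G_i$ and $T_0=\bigcup_{k\neq i}G_k$: taking $S_B=\emptyset$ yields $v(S_0\cup S_A)-v(S_0) = v(S_A)-v(\emptyset)$ for every $S_0\subseteq \bigcup_{k\neq i}G_k$ and every $S_A\subseteq G_i$. Consequently, for any $j\in G_i$ and any $S\subseteq D\setminus\{j\}$, the marginal contribution $v(S\cup\{j\})-v(S)$ depends only on $S\cap G_i$ and equals $v((S\cap G_i)\cup\{j\}) - v(S\cap G_i)$. Substituting into the definition of $\phi_j$, grouping the $S$'s by their intersection $S_i\subseteq G_i\setminus\{j\}$ with $G_i$, and invoking the standard identity
\begin{align*}
\sum_{t=0}^{n-|G_i|}\binom{n-|G_i|}{t}\binom{n-1}{t+|S_i|}^{-1} \;=\; \frac{n}{|G_i|}\binom{|G_i|-1}{|S_i|}^{-1}
\end{align*}
collapses $\phi_j$ to the Shapley value of $j$ in the restricted game over $G_i$. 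Efficiency in the restricted game then gives $\sum_{j\in G_i}\phi_j=v(G_i)-v(\emptyset)$, while a direct (and shorter) computation on the group-wise game shows that the marginal contribution of $G_i$ is the constant $v(G_i)-v(\emptyset)$, so $\phi_{G_i}=v(G_i)-v(\emptyset)$ as well. The two sides match.

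The main obstacle I expect is the combinatorial identity and the bookkeeping needed to separate the game on $G_i$ from its complement cleanly. A slicker route that avoids the explicit binomial sum is to set $v_k(S)=v(S\cap G_k)-v(\emptyset)$; iterating the additivity hypothesis gives $v(S)-v(\emptyset)=\sum_k v_k(S)$, and by the additivity and dummy-player axioms of the Shapley value, $\phi_j(v)=\phi_j(v_i)$ for $j\in G_i$ coincides with its Shapley value in the restricted game on $G_i$, at which point efficiency closes the argument directly.
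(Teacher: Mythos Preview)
Your proposal is correct and matches the paper's approach: reduce to the identity $\phi_{G_i}=\sum_{j\in G_i}\phi_j$ by collapsing each data-wise Shapley value to the restricted game on $G_i$ using the separability hypothesis and the binomial identity (which the paper establishes via a Beta-integral computation), then close with efficiency on both the restricted and the group game. The only cosmetic difference is that the paper applies Cauchy--Schwarz rather than the triangle inequality for the final bound, yielding the same constant; your alternative axiomatic shortcut via additivity and dummy players is not in the paper but is a valid and slightly cleaner way to reach the same restricted-game reduction.
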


\section{Experiments}


\subsection{Experimental Setup}
\subsubsection{Datasets}

The MNIST dataset~\cite{lecun1998gradient} is a well-known handwritten digits dataset widely utilized in machine learning. The dataset includes 60,000 training images and 10,000 testing images of handwritten digits. Each image is labeled with a number from 0 to 9 and is 28$\times$28 pixels in size. The CIFAR-10 dataset~\cite{krizhevsky2009learning} is widely recognized in computer vision as a fundamental dataset for image classification. It comprises 60,000 images of size 32$\times$32 pixels, evenly distributed into 10 different categories.

\subsubsection{Baselines}
We have the following data Shapley values estimators: (1) Leave-one-out (LOO)~\cite{black2021leave}, (2) DataShapley~\cite{ghorbani2019data}, (3) Group testing Shapley~\cite{jia2019towards}, (4) CS-Shapley~\cite{schoch2022cs}, (5) FreeShap~\cite{wang2024helpful}. 

\subsubsection{Implementation details}
The explainer model's training overhead grows exponentially as the service model's training data size increases. Consequently, we set the training data size to 100 for the service model. Subsequently, we uniformly sample the training data of the explainer model from the same data distribution. Additionally, to reduce the computational overhead of baselines, the network architecture of the service model is a CNN network. The architecture of the explainer model is a pre-trained ResNet-50, but the last layer is replaced with a convolutional layer and a linear layer. The output shape is $|D|\times |Y|$, and the output represents the Shapley values of the training data corresponding to different labels. 
The settings of the parameters in the experiment are as follows:  The explainer model's learning rate is $\alpha = 2e-4$. The service model's learning rate is $lr=e-4$, hyperparameters $\beta=10$, and the sub-service models' learning rates are $e-3$. In the $AFDS$ setting, we set $K=10$. In the $GFDS$ and $GFDS^+$ settings, we set $N=|Y|=10$ to group training data based on labels. Baselines calculate the Shapley values in advance using the accuracy over the entire test set to predict the training data's contributions to new test samples.

\subsection{Comparative Experiments}
\begin{figure*}[t]
  \centering
  \includegraphics[width=0.9\textwidth]{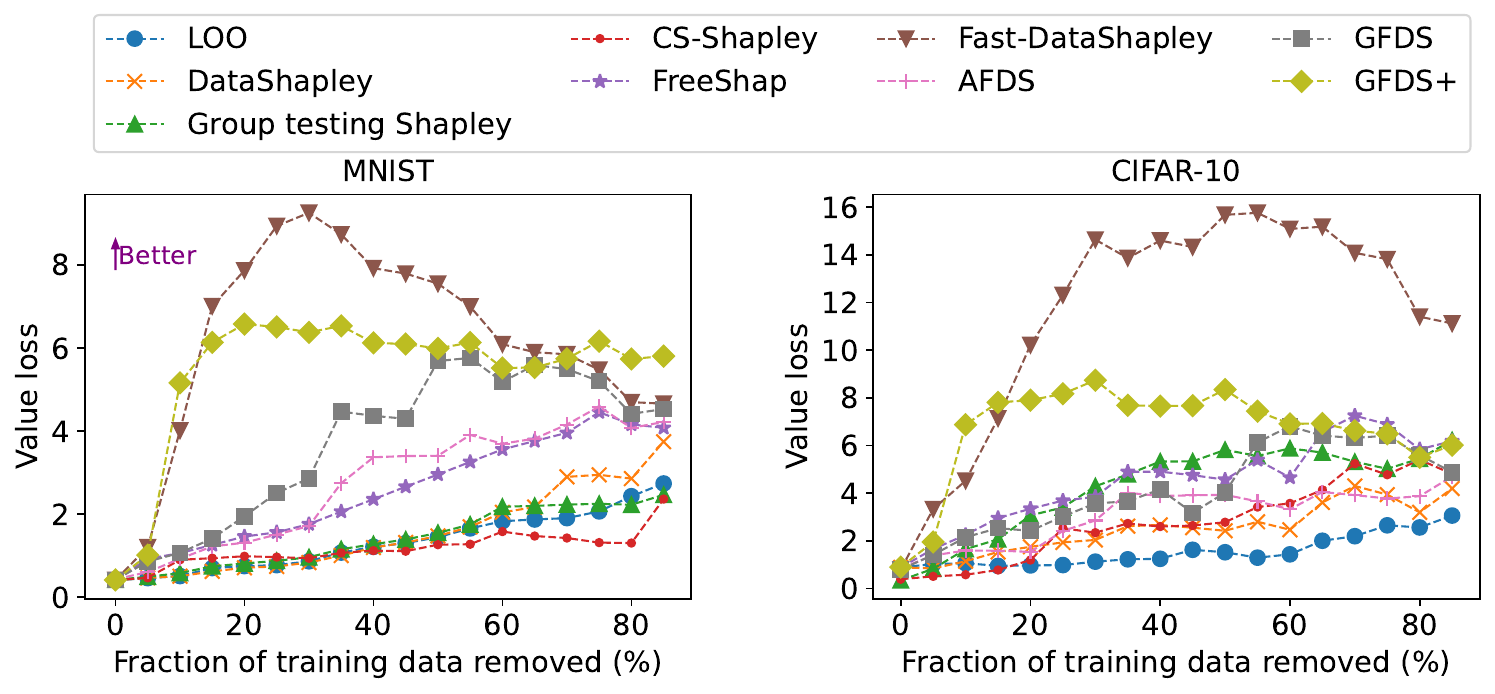}
  \caption{The comparison curve about value loss $H_\eta$ of different methods.}  
  \label{compare}
\end{figure*}

\subsubsection{Quantitative Evaluation}

Evaluating the accuracy of the Shapley value estimation requires the determination of the ground truth Shapley values, which is computationally challenging for images. So here we take data removal experiments~\cite{bhatt2020evaluating,hooker2019benchmark,wagner2019interpretable} to evaluate different training data.
We remove the top $\eta$ training data from large to small according to the predicted Shapley values of different methods. We also observe the cross entropy loss of the service model predicting the current task. 
Given a predicted sample set $T$, the metric value loss $H_\eta$ is defined as:
\begin{equation}
  H_\eta(T) = \frac{1}{|T|}\sum_{x\in T}-p(x)\log(\delta(f_{D^{'}}(x))),
\end{equation}
where $p(x)$ represents the one-hot vector of the true label. 
$D^{'}$ is the the remaining training data after removing the top $\eta$ data of $D$. A larger value of $H_\eta$ indicates the poorer performance of the model $f_{D^{'}}$, suggesting that the removed data contribute more to the predicted task.
Table \ref{table1} and \ref{table11} display the $H_\eta$ of various methods for both MNIST and CIFAR-10 datasets. We choose the drop rate $\eta$ from \{5\%,10\%,15\%,20\%\}. The results suggest that our methods outperform all of the baseline methods. 
Additionally, in Figure \ref{compare}, we present the value loss curves for both datasets. 
The curves indicate two interesting phenomena: the first is that the value loss of all methods will eventually converge to the same level due to the randomization of model performance with the increasing removal of the training data. 
For the ten-category data set, we have an equal number of training samples for each class. When predicting a specific sample, only about 10\% of the training samples play a key role. So removing the top 10\% of the training data will better reflect the performance. As the number of training samples for the same class decreases, the model will tend towards a phase of random guessing. Therefore, all methods will eventually converge to the same threshold. 
Secondly, $GFDS^+$ and $GFDS$ outperform both \oursys and $AFDS$ in some cases, which can be attributed to the introduction of the prior information by grouping the data according to true labels, making convergence easier for the explainer model under the same training conditions. Moreover, we can see that \oursys is still the best in the evaluation of the first 5\% of important samples.
We also present some visualization results, as demonstrated in Figure \ref{AFDS} and \ref{GFDS}. Notably, most training data contributing significantly to the predictions belong to the same categories as the test samples. The results are consistent with expectations.
However, we noticed that although some training samples do not belong to the same category as the test samples, they still play a positive role in the model predicting the test samples. We believe that such samples are abnormal samples for model training and it is difficult for the model to generalize performance. We can use this to explore the uninterpretability of deep models from the perspective of another training sample. At the same time, this discovery can also help in the screening of abnormal samples.

\subsubsection{Overhead Evaluation}
\begin{table*}[htbp]
\newcommand{\tabincell}[2]{\begin{tabular}{@{}#1@{}}#2\end{tabular}}
\renewcommand\arraystretch{1.1}
\centering
\caption{The explainer training overhead of different methods.}
\resizebox{\textwidth}{!}{
\begin{tabular}{ |c|c|c|c|c|c|c|c|c| } 
\hline
\bf{Methods}  &  DataShapley & Group testing Shapley & CS-Shapley& FreeShap & Fast-DataShapley (Ours)& $AFDS$ (Ours) & $GFDS$ (Ours) & $GFDS^+$ (Ours) \\
\hline
MNIST  & 15731 sec & 9609 sec & 9938 sec & 1628 sec & 2353 sec & 590 sec & 352 sec & 183 sec \\ 
\hline
CIFAR-10  & 33228 sec & 20952 sec & 21881 sec & 3173 sec & 4324 sec & 1120 sec & 396 sec& 259 sec\\ 
\hline
\end{tabular}
}
\label{table2}
\end{table*}

The experiments were conducted by 4 cores of AMD EPYC 7543 32-core processor and a single NVIDIA Tesla A30. 
Table \ref{table2} shows the training overhead of various methods. DataShapley costs the most time, while our methods \oursys, $AFDS$, $GFDS$ and $GFDS^+$ demonstrate sequentially decreasing time consumption.  
This aligns with the original design objective of our approaches. The results indicate that our methods have lower training overhead than the previous Shapley value-based method. Additionally, our methods allow for direct end-to-end prediction of new test samples without retraining.

\begin{figure*}[htbp]
  \centering
  \includegraphics[width=\textwidth]{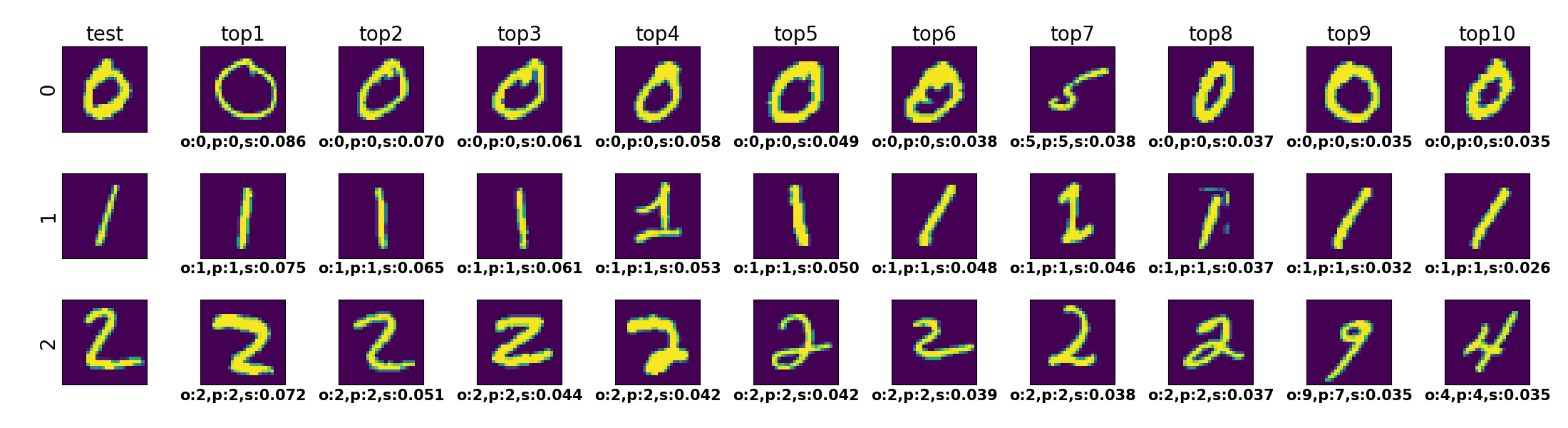}
  \caption{The visualization of $AFDS$ on MNIST shows the test samples in the first column, with the corresponding true labels on the left. The next ten columns display the top 10 predictions from the explainer model that contributed most to the test samples. $o$ represents the original label, $p$ represents the predicted label of the service model, and $s$ represents the SV.}
  \label{AFDS}
\end{figure*}

\begin{figure*}[htbp]
  \centering
  \includegraphics[width=\textwidth]{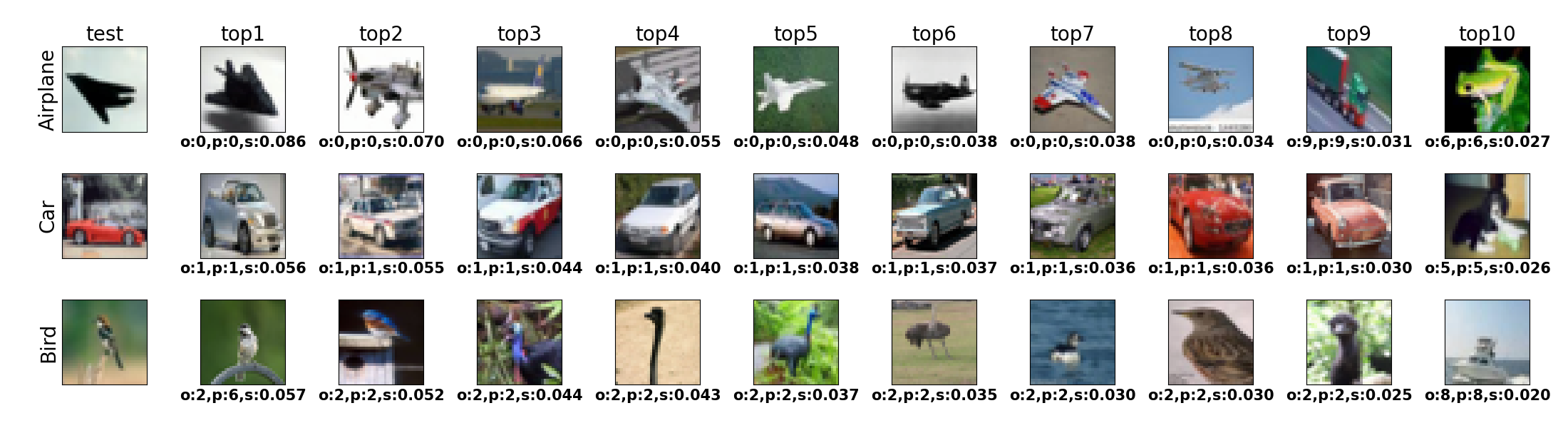}
  \caption{The visualization of $GFDS$ on CIFAR-10. }
  \label{GFDS}
\end{figure*}

\section{Related Work}
There are two primary approaches for analyzing and estimating the influence of training data~\cite{hammoudeh2022training}: \textbf{RBMs} and \textbf{GBIEs}.
RBMs mean allocating training data contributions by retraining models with different subsets of the data. These methods mainly include three classes: Leave-one-out, Downsampling, and the Shapley value methods.
LOO~\cite{ling1984residuals,black2021leave,jia2021scalability,sharchilev2018finding,wojnowicz2016influence} is the simplest and oldest. It removes a single sample from the training data and trains a new model. The change in empirical risk is then used to measure the influence of this sample. 
Downsampling, proposed by Feldman and Zhang~\cite{feldman2020neural,kandpal2022deduplicating,jiang2020characterizing}, mitigates two weaknesses of LOO: computational complexity and instability. This method relies on an ensemble of submodels trained with subsets of the training data and provides a statistically consistent estimator of the leave-one-out influence.
Shapley value (SV)~\cite{jia2019efficient,ghorbani2019data,jia2019towards,covert2021improving,schoch2022cs} is derived from cooperative game theory and is widely used for fair distribution of contributions. It has four good properties: effectiveness, symmetry, redundancy, and additivity. A typical representative is DataShapley, proposed by Ghorbani and Zou~\cite{ghorbani2019data}, which distributes contributions by treating the training samples as cooperative players. DataShapley can reduce the calculation complexity by offering two SV estimators: truncated Monte Carlo Shapley and Gradient Shapley.
GBIEs estimate the influence by adjusting the gradients of training and test samples during or after the training process. Gradient-based methods mainly contain static methods~\cite{koh2017understanding,guo2020fastif,schioppa2022scaling,yeh2018representer,sui2021representer} and dynamic methods~\cite{pruthi2020estimating,yeh2022first,hara2019data,chen2021hydra, wu2022davinz, wang2024helpful}.
Static methods use only final model parameters to measure influence, such as influence functions~\cite{koh2017understanding} and representer point methods~\cite{yeh2018representer}.
Influence functions examine how the empirical risk minimizer of the model changes when one training data is perturbed. 
Representer point methods distribute contributions of each training data by decomposing predictions for specific model classes.
Dynamic methods consider parameters from the entire model training process, such as 
TracIn~\cite{pruthi2020estimating} and HyDRA~\cite{chen2021hydra}.
TracIn fixes the training data and analyzes changes in model parameters for every training iteration.
HyDRA differs from TracIn in using a Taylor series-based analysis similar to the influence functions.
However, it is worth noting that these methods do not satisfy any fairness conditions except the Shapley value. DAVINZ~\cite{wu2022davinz} and FreeShap~\cite{wang2024helpful} explore the Shapley-based techniques by using the neural tangent kernel theory~\cite{jacot2018neural}.


\section{Conclusion and Future Work}

In this work, we propose a novel framework, \oursys, to calculate the Shapley values of training data for test samples. We evaluate the training data's contributions to the prediction samples based on these Shapley values.  We also propose three methods to reduce the training overhead.
Our methods can estimate Shapley values by a single forward pass using a learned explainer model in real-time and without retraining the target model. 
Experimental results demonstrate the superior performance of our methods compared to baselines across various datasets.
While our methods proposed in this paper only cover supervised tasks, there exists potential adaption for generative tasks like AIGC. 
However, for AIGC tasks, there are some new challenges to address. Firstly, retraining large AIGC models is greatly expensive, then designing a reasonable utility function poses another significant challenge. Therefore, further exploratory efforts are required.

\begin{acks}
The research is partially supported by  Quantum Science and Technology-National Science and Technology Major Project (QNMP) 2021ZD0302900 and China National Natural Science Foundation with No. 62132018, 62231015, "Pioneer” and “Leading Goose”R\&D Program of Zhejiang", 2023C01029, and 2023C01143.
\end{acks}

\appendix
\section{Appendix}
\subsection{Proof of Theorem ~\ref{thm1}.}
\begin{proof}[Proof ]
\begin{equation}
\begin{aligned}
&|\phi_i(v_a)-\phi_i(v)|=|\frac{1}{n}\sum_{s_i\neq 1}{\tbinom{n-1}{\vec1^T s}}^{-1}(v_a(s+e_i)-
v_a(s))-\\
&\frac{1}{n}\sum_{s_i\neq 1}{\tbinom{n-1}{\vec1^T s}}^{-1}(v(s+e_i)-v(s))|\le \sum_{s_i\neq 1}1/n{\tbinom{n-1}{\vec1^T s}}^{-1}\\
&(|v_a(s+e_i) - v(s+e_i)|+|v_a(s)-v(s)|)\\
&\le\sum_{s_i\neq 1}2/n{\tbinom{n-1}{\vec1^T s}}^{-1}\epsilon=2\epsilon.
\end{aligned}
\end{equation}
\end{proof}

\subsection{Proof of Theorem ~\ref{thm2}.}

Inspired by groupShapley equivalence~\cite{jullum2021groupshapley}. We extend the properties of group-wise Shapley values from the feature level to the training data level. We first present and prove the lemma that will be useful in the proof. 
\begin{lemma}

There are $n$ training data $D=\{(x_{i},y_{i})|i\in [n]\}$. Let us now define a partition $G = \{G_1,\dots, G_N\}$ of the training data, consisting of $N$ groups. Given a value function $v:\{S|S\subseteq D\}\to \mathbb{R}$. 
Then, data-wise Shapley values explaining the prediction are given by $\phi_j=\frac{1}{n}\sum_{S\subseteq D\textbackslash \{j\}}{\tbinom{n-1}{|S|}}^{-1}(v(S\cup\{j\})-v(S))$.
Group-wise Shapley values for the i-th group $G_i$ are given by $\phi_{G_i}=\frac{1}{N}\sum_{S\subseteq G\textbackslash \{G_i\}}{\tbinom{N-1}{|S|}}^{-1}(v(S\cup\{G_i\})-v(S))$.
Let $T_0$ and $T_{AB}$ be two disjoint subsets of the partition $G$ and $S_0$, $S_A$, $S_B$
be any three data-subsets where $S_0 \subseteq T_0$, $S_A\subseteq T_{AB}$, $S_B\subseteq T_{AB}$.
Assume $v(S_0\cup S_A)-v(S_0\cup S_B)=v(S_A)-v(S_B)$. 
\\Then, $$\phi_{G_i}=\sum_{j\in G_i}\phi_j,\ i=1,\dots,N. $$
i.e. group-wise Shapley values are the same as summing data-wise Shapley values.

\end{lemma}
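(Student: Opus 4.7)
My plan is to exploit the inter-group separability hypothesis to show that $v$ is additive across groups up to a $v(\emptyset)$ shift, and then use this decomposition to collapse both the data-wise and group-wise Shapley formulas to the same quantity $v(G_i)-v(\emptyset)$. Specializing the assumption with $S_B=\emptyset$ yields
\[
v(S_0\cup S_A)\;=\;v(S_0)+v(S_A)-v(\emptyset)
\]
whenever $S_0$ and $S_A$ lie in disjoint unions of groups from $G$. Iterating by taking $T_0=G_k$ and $T_{AB}=G_{k+1}\cup\cdots\cup G_N$ at each step and recursing on the tail, induction on $N$ gives
\[
v(S)\;=\;\sum_{k=1}^{N}v(S\cap G_k)\;-\;(N-1)\,v(\emptyset)\qquad\text{for every }S\subseteq D,
\]
which is the structural identity that drives the rest of the argument.

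Next I would apply this identity on each side. For $j\in G_i$ and any $S\subseteq D\setminus\{j\}$, the decomposition forces $v(S\cup\{j\})-v(S)=v((S\cap G_i)\cup\{j\})-v(S\cap G_i)$, so the marginal depends only on $S\cap G_i$. Passing to the permutation form $\phi_j=\tfrac{1}{n!}\sum_\pi[v(P_\pi(j)\cup\{j\})-v(P_\pi(j))]$ and counting, for each $T\subseteq G_i\setminus\{j\}$, the permutations $\pi$ with $P_\pi(j)\cap G_i=T$ (fix the relative order of $G_i$ with $T$ before $j$ and the remaining $G_i$-elements after, then interleave the $n-|G_i|$ outsiders in $n!/|G_i|!$ ways), the factor $n!$ cancels and one recovers exactly the within-group Shapley weight $|T|!(|G_i|-1-|T|)!/|G_i|!$. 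Hence $\phi_j=\psi_j$, the Shapley value of $j$ computed in the restricted game on $G_i$ alone.

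For the group-wise side, the same decomposition gives $v(S\cup\{G_i\})-v(S)=v(G_i)-v(\emptyset)$ for every $S\subseteq G\setminus\{G_i\}$; being a convex combination of identical terms, $\phi_{G_i}$ collapses to $v(G_i)-v(\emptyset)$. Efficiency of the within-group Shapley then yields $\sum_{j\in G_i}\psi_j=v(G_i)-v(\emptyset)=\phi_{G_i}$, establishing the claim. The main technical obstacle is the combinatorial identification $\phi_j=\psi_j$: once separability has stripped the dependence of the marginal on $D\setminus G_i$, one must verify that the multinomial count of how the outsiders interleave with a fixed relative order of $G_i$ collapses the data-wise Shapley weights to the within-group Shapley weights, but this reduces to pure bookkeeping once the permutation form is adopted.
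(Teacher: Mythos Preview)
Your argument is correct and shares the paper's overall skeleton: use the separability hypothesis to show that the marginal $v(S\cup\{j\})-v(S)$ for $j\in G_i$ depends only on $S\cap G_i$, deduce that $\phi_j$ equals the Shapley value $\psi_j$ of the restricted game on $G_i$, sum via efficiency to get $v(G_i)-v(\emptyset)$, and finally check that $\phi_{G_i}=v(G_i)-v(\emptyset)$ on the group side. The difference lies in how the central reduction $\phi_j=\psi_j$ is carried out. The paper stays with the subset form of the Shapley value, splits $S$ as $S_0\cup S_1$ with $S_0\subseteq D\setminus G_i$ and $S_1\subseteq G_i\setminus\{j\}$, and evaluates the inner sum $\sum_{S_0}\binom{n-1}{|S_0|+|S_1|}^{-1}$ by the Beta-integral identity $\int_0^1 x^a(1-x)^b\,dx=\frac{a!\,b!}{(a+b+1)!}$, which collapses it to $\frac{n}{|G_i|}\binom{|G_i|-1}{|S_1|}^{-1}$. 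You instead pass to the permutation representation and count directly how many permutations of $D$ have a prescribed $G_i$-prefix $T$ before $j$; the interleaving factor $n!/|G_i|!$ cancels against the global $1/n!$, leaving the within-group weight. Your route is purely combinatorial and avoids the analytic trick, while the paper's computation is shorter once the integral identity is granted. Your intermediate additive decomposition $v(S)=\sum_k v(S\cap G_k)-(N-1)v(\emptyset)$ is a clean structural statement, though the paper gets by with two direct specializations of the hypothesis and never writes it down.
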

\begin{proof}

\begin{equation}
\begin{aligned}
\phi_j&=\frac{1}{n}\sum_{S\subseteq D\textbackslash \{j\}}{\tbinom{n-1}{|S|}}^{-1}(v(S\cup\{j\})-v(S))\\
&=\frac{1}{n}\sum_{S_1\subseteq G_i\textbackslash \{j\}}\sum_{S_0\subseteq D\textbackslash G_i }{\tbinom{n-1}{|S_0|+|S_1|}}^{-1}(v(S_0\cup S_1\cup\{j\})\\ &-v(S_0\cup S_1))
\end{aligned}
\label{9.1}
\end{equation} 
Letting $T_0=D\textbackslash G_i $, $T_{AB}=G_i$, $S_0=S_0$, $S_A=S_1\cup\{j\}$, $S_B=S_1$ in condition. Then,
\begin{equation}
\begin{aligned}
v(S_0\cup S_1\cup\{j\})-v(S_0\cup S_1) = v( S_1\cup\{j\})-v(S_1).
\end{aligned}
\label{9.2}
\end{equation} 
We next focus on the inner sum of ~(\ref{9.1}). We have:
\begin{equation}
\begin{aligned}
\sum_{S_0\subseteq D\textbackslash G_i }{\tbinom{n-1}{|S_0|+|S_1|}}^{-1} = \sum_{k=0}^{n-|G_i|}\tbinom{n-|G_i|}{k}/\tbinom{n-1}{|S_1|+k}
\end{aligned}
\label{9.3}
\end{equation} 
By the identity:
\begin{equation}
\begin{aligned}
\int_0^1 x^a(1-x)^b dx = \frac{a!b!}{(a+b+1)!},\ \forall a,b\in N_+.
\end{aligned}
\label{9.4}
\end{equation} 
We have:
\begin{equation}
\begin{aligned}
{\tbinom{n-1}{|S_1|+k}}^{-1} = n \int_0^1 x^{|S_1|+k}(1-x)^{n-1-|S_1|-k} dx.
\end{aligned}
\label{9.5}
\end{equation} 
Then, 
\begin{equation}
\begin{aligned}
\sum_{k=0}^{n-|G_i|}&\tbinom{n-|G_i|}{k}/\tbinom{n-1}{|S_1|+k}
=n\int_0^1 x^{|S_1|}(1-x)^{n-1-|S_1|}\\
&\sum_{k=0}^{n-|G_i|}\tbinom{n-|G_i|}{k}(\frac{x}{1-x})^kdx\\
&=n\int_0^1 x^{|S_1|}(1-x)^{n-1-|S_1|}(\frac{1}{1-x})^{n-|G_i|}dx\\
&=n\int_0^1 x^{|S_1|}(1-x)^{|G_i|-|S_1|-1} dx = \frac{n}{|G_i|}\tbinom{|G_i-1|}{|S_1|}^{-1}
\end{aligned}
\label{9.6}
\end{equation} 
Put equations~(\ref{9.2}), ~(\ref{9.3}),~(\ref{9.6}) into equation~(\ref{9.1}). We get:

\begin{equation}
\begin{aligned}
\phi_j&=\frac{1}{|G_i|}\sum_{S\subseteq G_i\textbackslash \{j\}}{\tbinom{|G_i|-1}{|S|}}^{-1}(v(S\cup\{j\})-v(S))
\end{aligned}
\label{9.7}
\end{equation} 
Then,
\begin{equation}
\begin{aligned}
\sum_{j\in G_i}\phi_j = v(G_i)-v(\emptyset)
\end{aligned}
\label{9.8}
\end{equation} 
Furthermore, letting $T_0=G\textbackslash G_i $, $T_{AB}=G_i$, $S_0=S$, $S_A=G_i$, $S_B=\emptyset$, then we have:
\begin{equation}
\begin{aligned}
\phi_{G_i}&=\frac{1}{N}\sum_{S\subseteq G\textbackslash \{G_i\}}{\tbinom{N-1}{|S|}}^{-1}(v(S\cup\{G_i\})-v(S)) \\
&=(v(\{G_i\})-v(\emptyset)\frac{1}{N}\sum_{S\subseteq G\textbackslash \{G_i\}}{\tbinom{N-1}{|S|}}^{-1}\\
&=v(G_i)-v(\emptyset)=\sum_{j\in G_i}\phi_j
\end{aligned}
\label{9.9}
\end{equation} 
Thus,
\begin{equation}
\begin{aligned}
\phi_j^+-\phi_j &= \frac{1}{|G_i|}\phi_{G_i}-\phi_j= \frac{1}{|G_i|}\sum_{k\in G_i}\phi_k-\phi_j\\
&=\frac{1}{|G_i|}\sum_{k\in G_i,k\neq j}(\phi_k-\phi_j)
\end{aligned}
\label{9.10}
\end{equation} 
By Cauchy-Buniakowsky-Schwarz inequality, we have, 
\begin{equation}
\begin{aligned}
&|\phi_j^+-\phi_j|=|\frac{1}{|G_i|}\sum_{k\in G_i,k\neq j}(\phi_k-\phi_j)|\\
&\le \frac{1}{|G_i|}(\sum_{k\in G_i,k\neq j}1^2)^{\frac{1}{2}}(\sum_{k\in G_i,k\neq j}(\phi_k-\phi_j)^2)^{\frac{1}{2}}\le (1-\frac{1}{|G_i|})\epsilon
\end{aligned}
\label{9.11}
\end{equation} 

\end{proof}

\clearpage

\section{Ethical Considerations}

The proposed Fast-DataShapley framework aims to equitably evaluate the contributions of training data providers, which has implications for fairness and incentivization in data markets. However, potential negative societal impacts must be considered. Misuse of the framework by malicious actors could lead to biased data valuations, unfairly prioritizing certain data providers or underrepresenting others, potentially exacerbating existing inequalities in data markets. Additionally, the reliance on training data for valuation raises privacy concerns, as sensitive information could be inferred from the contributions assigned to specific datasets. Even when used as intended, inaccuracies in Shapley value approximations could lead to unfair compensation, undermining trust in AI systems. To mitigate these risks, we propose implementing robust auditing mechanisms to detect and correct biased valuations, ensuring transparency in the valuation process. Privacy-preserving techniques, such as differential privacy, should be integrated to protect sensitive data. Furthermore, clear guidelines and validation protocols for the explainer model’s outputs can help maintain fairness and trust, ensuring equitable treatment of all data providers.

\bibliographystyle{ACM-Reference-Format}
\bibliography{sample-base}

\end{document}